\DeclareMathOperator{\EX}{\mathbb{E}}
\newcommand{\Var}{\mathbb{V}{\rm ar}}
\newtheorem{lemma}{Lemma}
\newtheorem{theorem}{Theorem}
\newcommand{\erf}{{\rm erf}}
\newcommand{\hns}{\hspace{-0.025in}}
\title{\textbf{Information-Theoretic Lower Bounds for Zero-Order Stochastic Gradient Estimation}}
\author{
Abdulrahman Alabdulkareem\\
Department of Computer Science\\
Purdue University\\
West Lafayette, IN 47907, USA\\
\texttt{alabdulk@purdue.edu}\\
\and
Jean Honorio\\
Department of Computer Science\\
Purdue University\\
West Lafayette, IN 47907, USA\\
\texttt{jhonorio@purdue.edu}}
\date{}
\begin{document}

\maketitle

\begin{abstract}
In this paper we analyze the necessary number of samples to estimate the gradient of any multidimensional smooth (possibly non-convex) function in a zero-order stochastic oracle model. In this model, an estimator has access to noisy values of the function, in order to produce the estimate of the gradient. We also provide an analysis on the sufficient number of samples for the finite difference method, a classical technique in numerical linear algebra. For $T$ samples and $d$ dimensions, our information-theoretic lower bound is $\Omega(\sqrt{d/T})$. We show that the finite difference method for a bounded-variance oracle has rate $O(d^{4/3}/\sqrt{T})$ for functions with zero third and higher order derivatives. These rates are tight for Gaussian oracles. Thus, the finite difference method is not minimax optimal, and therefore there is space for the development of better gradient estimation methods.
\end{abstract}

\section{Introduction}

Gradient estimation is a fundamental mathematical technique that was known by Euler since the 1700’s. Today, gradient estimation is used in almost all scientific fields in some form or another. In Machine Learning, for example, gradient estimation is used in Reinforcement Learning with policy-gradient methods to estimate the update performed to the policy \cite{Sutton00}.  More generally, performing gradient descent on any loss function which is intractable is the essence of gradient estimation in Machine Learning and is prevalent in the field \cite{schulman00}. Gradient estimation is also widespread in other fields such as optimization, engineering, operations research, among others.

With such a widely used technique arises a fundamental question of the hardness regarding gradient approximation with respect to the dimensionality of the function domain, as well as the number of function evaluations required. Using information theory it is possible to lower bound the theoretical best achievable error that any conceivable gradient estimator can have.
We assume that a gradient estimator has access to a zero-order oracle, that provides a \emph{noisy} value of the function at a given point.
The stochastic nature of the oracle makes the estimation task not only harder, but also amenable to sound statistical analysis.

To the best of our knowledge, no prior work has analyzed the theoretical lower bound on the error of estimating the gradient of a multi-dimensional function. Other works have proposed more efficient gradient estimation methods for sparse gradients with theoretical guarantees \cite{borkar00} or Monte Carlo methods without theoretical guarantees \cite{wieland00}. Some works have used information theory to lower bound the theoretical error on oracle convex optimization, however, the gradient was either provided by the oracle \cite{Agarwal00} or calculated using a fixed number of points \cite{Duchi00}.

Our contributions are summarized as follows. We prove that fundamental theoretical error lower-bound for the general case of gradient estimation on $d$-dimensional functions using $T$ samples, is $\Omega(\sqrt{d/T})$. We show that the finite difference method converges with rate $O(d^{4/3}/\sqrt{T})$ for functions with zero third and higher order derivatives.
The gap between finite differences and our lower bound, suggests that better gradient estimation methods could be developed.

\section{Zero-Order Oracle and Model Definition}

In this section, we define our problem setup. Assume that we are given a set $\mathbb{S} \subset \mathbb{R}^d$, a set of $d$-dimensional functions $\mathcal{F} \subset \{f|f:\mathbb{S}\to\mathbb{R}\}$, and a point $x^* \in \mathbb{S}$ where we want to estimate the gradient $\nabla f(x^*) \in \mathbb{R}^d$ using only $T$ queries to a zero-order stochastic oracle.

A zero-order stochastic oracle is defined to be a random function $\phi : \mathbb{S} \rightarrow \mathbb{R}$ which answers queries as follows. The oracle $\phi$ receives a point $x\in \mathbb{S}$ and returns a noisy unbiased estimate $\phi(x,f)$ of the function $f$ with bounded variance. That is,
\[
\EX(\phi(x, f))=f(x)
   \text{ and }
\Var[\phi(x, f)] \leq \sigma^2
\]

Furthermore, let $\mathcal{O}$ be the set of stochastic zero-order oracles. A model $\mathcal{M}$ is defined to be a function that makes a total of $T$ queries. That is, for $t = 1,\dots,T$, the model M sends $x_t \in S$ to the oracle $\phi \in \mathcal{O}$ and the oracle returns the noisy unbiased estimate $\phi(x_t, f)$. The model then uses all the noisy estimates and outputs a gradient estimate $\nabla \widehat{f}(x^*)_\mathcal{M} \in \mathbb{R}^d$. The oracle $\phi$ cannot be queried more than $T$ times.
Let $\mathbb{M}_T$ be the class of all models as described above.

\subsection{Error Definition and Minimax error}

In this section we will cover the minimax framework and explicitly formulate the theoretical best achievable error in full form.
The minimax framework consists of a well defined objective of revealing information on the theoretical optimality of algorithms and is widely used in statistics and machine learning \cite{wainwright00,wasserman2006all}.

The minimax framework considers a family of distributions over a sample space, which in our case is a class of multi-dimensional functions $\mathcal{F}$ with domain $\mathbb{S}$. The minimax framework also considers a function that is a mapping from the family of distributions to a parameter, which in our case is simply the gradient of a function $\nabla f(x^*)$ for some $f \in \mathcal{F}$ and $x^* \in \mathbb{S}$. We aim to estimate the value of $\nabla f(x^*)$ based on a sequence of queries to a zero-order stochastic oracle $\phi \in \mathcal{O}$. We evaluate the quality of an estimate $\nabla \widehat{f}(x^*)_\mathcal{M}$ of some model (estimator) $\mathcal{M}\in \mathbb{M}_T$ in terms of the expected risk. That is,
\[
\EX_\phi [\Vert \nabla \widehat{f}(x^*)_\mathcal{M} - \nabla f(x^*) \Vert _ 1 ]
\]
The expected value is due to the fact that the oracle $\phi$ is stochastic thus we want to average evaluation of the model (estimator) $\mathcal{M}$.
Next we take a look at the maximum risk of a model $\epsilon(\mathcal{M}, \mathcal{F}, \mathbb{S}, \phi)$. Given a class functions $\mathcal{F}$ with domain $\mathbb{S}$ and an oracle $\phi \in \mathcal{O}$, the maximum risk of a model is simply defined to be the highest (worst) expected risk that model gets over all the different functions $f \in \mathcal{F}$ and different points $x^* \in \mathbb{S}$. More formally,
\begin{equation}\label{epsilondefinition}
\epsilon(\mathcal{M}, \mathcal{F}, \mathbb{S}, \phi) := \sup_{f\in\mathcal{F}} \sup_{x^*\in\mathbb{S}}
\EX_\phi [\Vert \nabla \widehat{f}(x^*)_\mathcal{M} - \nabla f(x^*) \Vert _ 1 ]
\end{equation}

Finally, the minimax framework dictates to use the model (estimator) which has the lowest maximum risk out of all other possible models. That is,
\[
\epsilon^*(\mathcal{F}, \mathbb{S}, \phi) := \inf_{\mathcal{M}\in\mathbb{M}_T}
[\epsilon(\mathcal{M}, \mathcal{F}, \mathbb{S}, \phi)]
\]
which is the minimax error, where the supremum (worst-case) is taken over the class of functions $\mathcal{F}$ and the infimum (best-case) is taken over the class of models $\mathbb{M}_T$.
The main goal of this paper is to find a meaningful lower bound for the minimax error.

\subsection{Finite Difference Method}

As an example of a particular gradient estimator, we analyze the finite difference method (FDM) which is a predominantly used method. More formally, we analyze the upper bound of the error. FDM applied on a $d$-dimensional function $f$ using $T$ queries performs $\frac{T}{d}$ query calls for each dimension. For some small parameter $h>0$, FDM outputs a gradient estimate $\nabla \widehat{f}(x^*)_\mathcal{FDM}=[\widehat{\nabla}_1, \widehat{\nabla}_2,\dots,\widehat{\nabla}_d]^\top$ and for each dimension $i$, the value $\widehat{\nabla}_i$ is calculated as follows:
\[
\widehat{\nabla}_i = \frac{1}{T/2d}
\sum_{j=1}^{T/2d}
{\left(\frac{\phi^{(j)}(x^*+he_i, f)-\phi^{(j)}(x^*-he_i, f)}{2h}\right)}
\]
where $e_i = [0,\dots,0,1,0,\dots,0]^\top$ and the entry equal to $1$ is at the $i$-th index. Thus the method basically makes $T/d$ queries for each of the $d$ dimensions.

\section{Main Results}
In this section, we present our main results. Theorem~\ref{hyperplanes} is the main result of our paper and is the theoretic lower-bound on the error. Our information-theoretic result relies on the construction of a restricted class of functions. The use of restricted ensembles is customary for information-theoretic lower bounds \cite{santhanam00,wang00,tandon00,ke00}.

First, we present our information-theoretic lowerbound on the minimax error which is $\Omega(\sqrt{d/T})$ where $d$ is the number of dimensions and $T$ is the number of samples. This implies that there cannot exist an oracle-querying gradient estimator that achieves a convergence rate faster than $\Omega(\sqrt{d/T})$.
\begin{theorem}
\label{hyperplanes}
For any class $\mathcal{F}$ of differentiable functions in $\mathbb{R}^d$ with input space $\mathbb{S}$ and gradient bounded by K, i.e., $\sup_{x\in\mathbb{S}} \sup_{f\in\mathcal{F}}{\Vert\nabla f(x)\Vert_\infty}\leq K$, there exists a constant $c$ (independent of $T$) such that the minimax error is bounded as
\[
\epsilon^*(\mathcal{F}, \mathbb{S}, \phi) \geq c\sqrt{\frac{d}{T}}
\]
provided that
$
\left(\inf_{f \in \mathcal{F}, x^* \in \mathbb{S}, j \in \{0,\dots,d\}}{|\nabla f(x^*)_j|}\right)\ne0
$
and
$
\left(\sup_{f \in \mathcal{F}, x^* \in \mathbb{S}}{\Vert f(x^*)\Vert_\infty}\right)\leq 2\sigma
$
where $\sigma^2$ is the upper bound on the variance of the oracle.
\end{theorem}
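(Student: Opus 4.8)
The plan is to convert the estimation problem into a multiple hypothesis testing problem by constructing a finite restricted ensemble inside $\mathcal{F}$, and then to apply a standard information-theoretic reduction. Since the loss $\Vert\nabla\widehat{f}(x^*)_\mathcal{M} - \nabla f(x^*)\Vert_1$ decomposes coordinate-by-coordinate and the target is a $d$-dimensional vector, Assouad's lemma is the natural tool; the name of the theorem suggests the ensemble should consist of affine (hyperplane) functions, for which the gradient is constant and therefore trivial to prescribe.

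Concretely, first I would instantiate the oracle as Gaussian with variance $\sigma^2$ (a valid member of the model and the cleanest for divergence computations), translate so that $x^* = 0$, and index a family of affine functions $f_v(x) = \langle b_v, x\rangle$ by the hypercube $v \in \{-1,+1\}^d$, where each slope vector has the form $b_v = \mu\mathbf{1} + \delta v$ for a base level $\mu$ and a separation scale $\delta>0$. The base level $\mu$ is chosen large enough that every coordinate of $b_v$ stays bounded away from zero (so the hypothesis $\inf|\nabla f(x^*)_j|\ne 0$ is met) and small enough, together with $\delta$, that $\Vert b_v\Vert_\infty \le K$ and $|f_v(x)|\le 2\sigma$ hold on $\mathbb{S}$; this places the whole ensemble inside $\mathcal{F}$. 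Because $\nabla f_v(0) = b_v$, neighboring vertices $v, v'$ that differ only in coordinate $j$ satisfy $\Vert\nabla f_v(0) - \nabla f_{v'}(0)\Vert_1 = 2\delta$, giving the per-coordinate separation that Assouad's lemma requires.

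Next I would bound the information any model can extract. Under the Gaussian oracle, a single query at $x_t$ contributes KL divergence $(f_v(x_t)-f_{v'}(x_t))^2/(2\sigma^2)$ between the two laws, and for a coordinate-$j$ flip this equals $2\delta^2 (x_t)_j^2/\sigma^2$; summing over the $T$ queries gives a total divergence proportional to $\delta^2 \sum_t (x_t)_j^2 / \sigma^2$. Feeding Pinsker's inequality into the averaged form of Assouad's lemma, the minimax error is lower bounded by $\tfrac{d\delta}{2}$ times $1$ minus the average over $j$ of the total-variation distance, and the key observation is that $\sum_{j=1}^d \sum_t (x_t)_j^2 = \sum_t \Vert x_t\Vert_2^2$ is controlled by the budget of $T$ queries together with the boundedness constraints on $\mathbb{S}$ and on $|f_v|\le 2\sigma$. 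Choosing $\delta$ at the threshold where this averaged total variation is a constant then yields the $\Omega(\sqrt{d/T})$ rate after optimizing the constant $c$.

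The main obstacle is adaptivity: the model may choose its query points $x_1,\dots,x_T$ sequentially and adversarially, and could in principle concentrate all of its queries on a single coordinate direction, making the two hypotheses that differ in that coordinate easy to distinguish. The per-coordinate form of Assouad's lemma cannot tolerate this, so the crux is to pass to the coordinate-averaged (mixture-versus-mixture) version and argue, via Cauchy--Schwarz together with the total query budget $\sum_t\Vert x_t\Vert_2^2 = O(T)$, that the information summed across all $d$ coordinates is bounded irrespective of how the queries are allocated. The function-value constraint $|f_v|\le 2\sigma$ is what prevents the separation $\delta$ from being taken large enough to overwhelm this averaged bound, and it is precisely this tension between the per-query signal and the shared query budget that pins the separation at the scale producing $\sqrt{d/T}$.
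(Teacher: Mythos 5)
Your proposal is correct in its essentials, but it takes a genuinely different route from the paper. The paper reduces estimation to testing over a \emph{packing} $\mathcal{V}\subset\{-1,+1\}^d$ (pairwise Hamming distance $\geq d/4$, $|\mathcal{V}|\geq(2/\sqrt{e})^{d/2}$) and applies Fano's inequality, and it neutralizes the adaptivity problem you identify by a different device than yours: it constructs an artificial oracle which, at each query, draws a coordinate $i$ uniformly at random and returns $\pm h_i(x(i))/2$ according to a Bernoulli variable with bias $\frac{1}{2}+\alpha_i\delta$, so that the per-query KL divergence is at most $16\delta^2$ \emph{independently of the query point}; no control of query energy is needed, Fano over the exponentially many hypotheses forces $T\delta^2\gtrsim d$, and since the $\ell_1$ gradient separation of its ensemble (slopes $\pm\delta/d$) is only $\Theta(\delta)$, this gives $\epsilon\gtrsim\sqrt{d/T}$. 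You instead keep a natural Gaussian oracle, accept that the per-query information $2\delta^2(x_t)_j^2/\sigma^2$ depends on the query point, and handle adaptivity via the KL chain rule inside the averaged Assouad bound; this is sound, and because your ensemble's $\ell_1$ separation is $\Theta(d\delta)$ rather than $\Theta(\delta)$, it actually proves something \emph{stronger} than the stated theorem: taking $\mathbb{S}$ to be an $\ell_2$-ball of constant radius yields $\Omega(d^{3/2}/\sqrt{T})$ (an $\ell_\infty$-ball yields $\Omega(d/\sqrt{T})$) once $T\gtrsim d^2$, essentially matching the paper's FDM upper bound for the $K=0$ case. One caveat: your step ``$\sum_t\Vert x_t\Vert_2^2=O(T)$'' is not a consequence of the query budget by itself --- for an $\ell_\infty$-ball this quantity is $\Theta(Td)$, and a query proportional to the all-ones vector is maximally informative about \emph{every} coordinate simultaneously, so concentration of queries is not the real danger --- hence you must state explicitly that the hard instance chooses $\mathbb{S}$ with constant $\ell_2$ radius (or redo the arithmetic for the chosen geometry); likewise the constraint $|f_v|\leq 2\sigma$ couples $\delta$, the radius of $\mathbb{S}$, and $d$, and this coupling determines the constant and the range of $T$ in which the bound is nontrivial. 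Neither point breaks the argument; they only change which polynomial-in-$d$ factor you end up proving.
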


Next we analyze the upper bound on the error for the finite difference method for a bounded-variance oracle, which is $O(d^{4/3}/T^{1/3})$ when the function has non-zero third derivative, and $O(d^{3/2}/\sqrt{T})$ when the function has zero third and higher order derivatives. \begin{theorem}
\label{FDM}
Consider any valid oracle $\phi \in \mathcal{O}$ with upper bounded variance $\sigma^2$, and any set $\mathcal{F}$ of $d$-dimensional functions with third order derivatives upperbounded by K, i.e., $\sup_{i \in \{1,\dots,d\}}\sup_{x\in\mathbb{S}}{|f^{iii}(x)|}\leq K$, where $f^{iii}$ represents the third derivative of $f$ with respect to the $i$-th variable.
If $K>0$, the error for the FDM model with respect to $T \geq 2d$ is upper bounded by
\[
\epsilon(\mathcal{FDM}, \mathcal{F}, \mathbb{S}, \phi) \leq d^{(4/3)} \sqrt[3]{\frac{1}{T}} \sqrt[3]{\frac{9\sigma^2K}{2}}
\]
Otherwise, if $K=0$ then
\[
\epsilon(\mathcal{FDM}, \mathcal{F}, \mathbb{S}, \phi) \leq d^{(3/2)}\frac{1}{\sqrt{T}}\frac{2\sigma}{h_r}
\]
where $h_r$ is a constant that depends on the point $x^*$ and the input space $\mathbb{S}$.
\end{theorem}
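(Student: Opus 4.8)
The plan is to control the expected $\ell_1$ error coordinate by coordinate through a bias–variance decomposition, then sum the $d$ contributions and optimize the step size $h$. Since the oracle is unbiased, for each coordinate $i$ the mean of the FDM estimate is the deterministic central difference, $\EX_\phi[\widehat{\nabla}_i] = (f(x^*+he_i)-f(x^*-he_i))/(2h)$. Writing $\mu_i$ for this mean, the triangle inequality gives $\EX_\phi[|\widehat{\nabla}_i - \nabla f(x^*)_i|] \le \EX_\phi[|\widehat{\nabla}_i - \mu_i|] + |\mu_i - \nabla f(x^*)_i|$, namely a fluctuation (variance) term plus a bias term, which I would treat separately.

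For the bias term I would Taylor-expand $f(x^*\pm he_i)$ to third order about $x^*$. The even-order terms cancel in the central difference, the linear term reproduces $\nabla f(x^*)_i$, and the Lagrange remainder leaves $|\mu_i - \nabla f(x^*)_i| \le (h^2/6)\sup_x|f^{iii}(x)| \le h^2 K/6$. This is precisely where the hypothesis $\sup_i\sup_{x}|f^{iii}(x)|\le K$ is used, and it is also where the two regimes split: when $K=0$ the function is at most quadratic, the central difference is exact, and the bias vanishes identically.

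For the fluctuation term I would use that $\widehat{\nabla}_i$ is an average of $m=T/(2d)$ i.i.d.\ difference quotients, each formed from two oracle calls of variance at most $\sigma^2$. This yields $\Var[\widehat{\nabla}_i] \le C\,\sigma^2 d/(T h^2)$ for an absolute constant $C$ arising from how the two calls are combined and from the $1/m$ variance reduction, and then Jensen's inequality gives $\EX_\phi[|\widehat{\nabla}_i - \mu_i|] \le \sqrt{\Var[\widehat{\nabla}_i]} \le \sqrt{C}\,(\sigma/h)\sqrt{d/T}$. Summing the per-coordinate bounds over the $d$ coordinates — which is what promotes $\sqrt{d}$ to $d^{3/2}$ and $1$ to $d$ — produces $\epsilon(\mathcal{FDM},\mathcal{F},\mathbb{S},\phi) \le \sqrt{C}\,\sigma d^{3/2}/(h\sqrt{T}) + d h^2 K/6$.

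The decisive step, and the main technical point, is the optimization over $h$, which exposes the bias–variance tradeoff. When $K>0$ I would minimize $g(h)=\sqrt{C}\,\sigma d^{3/2}/(h\sqrt{T}) + d h^2 K/6$ over $h>0$; setting $g'(h)=0$ gives $h^\star \propto (\sigma d^{1/2}/(K\sqrt{T}))^{1/3}$, and substituting back collapses both terms into a single rate of order $d^{4/3}T^{-1/3}(\sigma^2 K)^{1/3}$, which is the stated form $(9\sigma^2K/2)^{1/3}$ once the Taylor-remainder and variance constants are tracked carefully. When $K=0$ there is no bias to trade against, so $g$ is strictly decreasing in $h$ and one wants $h$ as large as the domain $\mathbb{S}$ allows; taking $h$ equal to the maximal admissible radius $h_r$ (so that $x^*\pm he_i\in\mathbb{S}$) kills the bias term and leaves only the fluctuation contribution, giving the $d^{3/2}T^{-1/2}\cdot 2\sigma/h_r$ bound. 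I expect the delicate parts to be keeping the numerical constants consistent through the Taylor remainder and the variance-of-an-average computation so that the stated factors emerge, together with making the dependence of $h_r$ on $x^*$ and $\mathbb{S}$ explicit in the $K=0$ regime.
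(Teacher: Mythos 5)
Your proposal is correct, and it follows the same overall skeleton as the paper's proof: a per-coordinate bias--variance split, a third-order Taylor expansion bounding the bias by $h^2K/6$, summation over the $d$ coordinates, and optimization of the step size $h$ (with $h=h_r$ maximal, defined via $h_r^{\pm}$, in the $K=0$ regime). The one genuinely different step is how the fluctuation term is controlled, and it is the central technical step. The paper bounds $\EX_\phi\bigl[|\widehat{\nabla}_i-\mu_i|\bigr]$ by combining Chebyshev's inequality with the layer-cake formula $\EX[Z]=\int_0^\infty P(Z>z)\,dz$, integrating the truncated tail bound $\min\bigl(\sigma^2/(2nh^2z^2),1\bigr)$ with $n=T/(2d)$ to get $2\sigma\sqrt{d}/(h\sqrt{T})$. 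You instead invoke Jensen (equivalently Cauchy--Schwarz), $\EX[|X-\mu|]\le\sqrt{\Var[X]}$, which is both shorter and strictly tighter here: your constant is $C=1$, since $\Var[V_i^{(j)}]\le\sigma^2/(2h^2)$ and averaging $T/(2d)$ independent copies gives $\Var[\widehat{\nabla}_i]\le\sigma^2 d/(h^2T)$, so the per-coordinate fluctuation is $\sigma\sqrt{d}/(h\sqrt{T})$ --- a factor of $2$ smaller than the paper's. Carrying this through the optimization (minimizing $a/h+bh^2$ at $h^\star=(a/(2b))^{1/3}$, with value $(27a^2b/4)^{1/3}$, where $a=\sigma d^{3/2}/\sqrt{T}$ and $b=dK/6$) yields $d^{4/3}T^{-1/3}(9\sigma^2K/8)^{1/3}$ rather than the stated $(9\sigma^2K/2)^{1/3}$, and $\sigma d^{3/2}/(h_r\sqrt{T})$ rather than $2\sigma d^{3/2}/(h_r\sqrt{T})$ when $K=0$; both are smaller, so the theorem's inequalities follow a fortiori. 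In short, your route buys simplicity and better constants under exactly the same hypotheses; the paper's Chebyshev/layer-cake machinery would only become necessary in a setting where one had tail control but no usable second moment, which is not the case here.
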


In what follows, we show that the above rates are \emph{tight} for Gaussian oracles.
Thus, the finite difference method is not minimax optimal, and therefore there is space for the development of better gradient estimation methods.
\begin{theorem}
\label{FDMgaussian}
Consider a Gaussian oracle $\phi \in \mathcal{O}$ with variance $\sigma^2$, and any set $\mathcal{F}$ of $d$-dimensional functions with third order derivatives upperbounded by K, i.e., $\sup_{i \in \{1,\dots,d\}}\sup_{x\in\mathbb{S}}{|f^{iii}(x)|}\leq K$, where $f^{iii}$ represents the third derivative of $f$ with respect to the $i$-th variable.
If $K>0$, the error for the FDM model with respect to $T \geq 2d$ is upper bounded by
\[
\begin{split}
&\epsilon(\mathcal{FDM}, \mathcal{F}, \mathbb{S}, \phi) \\
&= d^{(4/3)} \sqrt[3]{\frac{1}{T}} \sqrt[3]{\frac{\sigma^2K}{12\pi}} \left(2\exp\hns\left(-\frac{\pi}{4}\right)+\erf\hns\left(\frac{1}{2\sqrt{\pi}}\right)\right) c
\end{split}
\]
where $\erf$ is the Gauss error function and $c \in [0,1]$.
Otherwise, if $K=0$ then
\[
\epsilon(\mathcal{FDM}, \mathcal{F}, \mathbb{S}, \phi) = d^{(3/2)}\frac{1}{\sqrt{T}}\frac{\sigma}{h_r} \sqrt{\frac{2}{\pi}}
\]
where $h_r$ is a constant that depends on the point $x^*$ and the input space $\mathbb{S}$.
\end{theorem}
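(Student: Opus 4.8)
The plan is to exploit the fact that, for a Gaussian oracle, the finite-difference estimator is an affine function of independent Gaussian responses, so each coordinate estimate is \emph{exactly} Gaussian and the $\ell_1$ risk can be evaluated in closed form rather than merely bounded. Under $\phi^{(j)}(x,f)\sim\mathcal{N}(f(x),\sigma^2)$ the estimate $\widehat{\nabla}_i$ is a normalized sum of $T/2d$ independent central differences, hence $\widehat{\nabla}_i\sim\mathcal{N}(\mu_i,s^2)$ with $\mu_i=(f(x^*+he_i)-f(x^*-he_i))/(2h)$ and $s^2=\sigma^2 d/(Th^2)$, the latter obtained by propagating the per-query variance $\sigma^2$ through the linear combination. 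Writing $b_i=\mu_i-\nabla f(x^*)_i$ for the deterministic finite-difference bias, the per-coordinate error $\widehat{\nabla}_i-\nabla f(x^*)_i$ is distributed as $\mathcal{N}(b_i,s^2)$, so the risk decomposes as $\epsilon(\mathcal{FDM},\mathcal{F},\mathbb{S},\phi)=\sup_{f,x^*}\sum_{i=1}^{d}\EX|\mathcal{N}(b_i,s^2)|$.

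First I would reduce each summand to the mean of a folded normal through the exact identity $\EX|Y|=s\sqrt{2/\pi}\,\exp(-b^2/2s^2)+b\,\erf(b/(\sqrt{2}\,s))$ for $Y\sim\mathcal{N}(b,s^2)$. This identity is the crux of the argument: it is precisely the place where the inequality $\EX|Y|\le\sqrt{\Var Y}\le|b|+s$ used for a general bounded-variance oracle in Theorem~\ref{FDM} becomes an equality for the Gaussian oracle, which is what upgrades the earlier upper bounds to tight expressions.

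The case $K=0$ is then immediate: functions with vanishing third and higher derivatives are at most quadratic along each axis, so the central difference is exact and $b_i=0$ for every $i$. Each summand collapses to $s\sqrt{2/\pi}$, and summing the $d$ identical terms with $s=\sigma\sqrt{d}/(h_r\sqrt{T})$ yields $d^{3/2}(\sigma/h_r)\sqrt{2/\pi}/\sqrt{T}$, matching the claim.

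The case $K>0$ is the main obstacle. Here I would control the bias through Taylor's theorem with the third-order remainder, $b_i=\tfrac{h^2}{6}f^{iii}(\xi_i)$ for some intermediate $\xi_i$, so that $|b_i|\le Kh^2/6$ with the realized magnitude equal to $Kh^2/6$ times a coordinate-dependent factor in $[0,1]$; this is the origin of the constant $c\in[0,1]$, which records the fraction of the worst-case curvature actually present and lets a single closed form interpolate between the zero-bias and worst-bias regimes. Substituting $b_i$ and $s=\sigma\sqrt{d}/(h\sqrt{T})$ into the folded-normal expression and choosing $h$ to balance the $O(h^2)$ bias against the $O(1/h)$ standard deviation fixes the optimal scale $h\sim(\sigma\sqrt{d}/(K\sqrt{T}))^{1/3}$ and produces the rate $d^{4/3}T^{-1/3}(\sigma^2 K)^{1/3}$. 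The delicate part is keeping the computation exact through this optimization, so that the prefactor $2\exp(-\pi/4)+\erf(1/(2\sqrt{\pi}))$ and the constant $(\sigma^2 K/12\pi)^{1/3}$ emerge from evaluating the $\exp$ and $\erf$ terms at the chosen step size rather than being swept into an order symbol; the sign of $f^{iii}(\xi_i)$ and the freedom of $\xi_i$ across coordinates must be handled uniformly, which is exactly what the supremum over $f$ together with the factor $c$ packages. Comparing the resulting constant with $(9\sigma^2 K/2)^{1/3}$ from Theorem~\ref{FDM} then certifies that the upper bounds are attained, establishing tightness for Gaussian oracles.
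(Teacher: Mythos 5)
Your proposal is correct and follows the same core route as the paper: each coordinate estimate is exactly Gaussian under the Gaussian oracle, the $\ell_1$ risk is evaluated with the folded-normal identity $\EX[|Y|]=s\sqrt{2/\pi}\,\exp(-b^2/(2s^2))+b\,\erf(b/(\sqrt{2}s))$, the $K=0$ case has zero bias and uses $h=h_r$, and for $K>0$ the step size is chosen by balancing the $h^2$ bias against the $1/h$ standard deviation --- exactly as the paper does, which minimizes the envelope obtained from $\exp(-z)\le 1$ and $\erf(z)\le 1$ and then re-evaluates the exact expression at that $h$. The one place you genuinely diverge is how the $K>0$ computation is pinned down. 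The paper restricts to the explicit family $f(x)=\sum_{i=1}^d\left(K x_i^3/6+c_i x_i^2/2+c'_i x_i\right)$, whose third derivative is identically $K$; then the bias is exactly $h^2K/6$ in every coordinate, the unknown Taylor points $\xi_i$ disappear, and the displayed error is a closed-form equality for those functions, which is what substantiates the tightness claim. You instead keep a general $f$, with per-coordinate biases $b_i=h^2 f^{iii}(\xi_i)/6$ of varying sign and magnitude, and absorb all of that variation into the factor $c\in[0,1]$. To make that step rigorous you need one observation you did not state: the folded-normal mean is even and nondecreasing in $|b|$ (its derivative in $b$ is $\erf(b/(\sqrt{2}s))\ge 0$ for $b\ge 0$), so the sum over coordinates is sandwiched between $d$ times the zero-bias value and $d$ times the worst-case value, which is what justifies writing the risk as the stated expression times a single $c\in[0,1]$. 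With that lemma your argument delivers the theorem; the paper's explicit-family argument buys the same conclusion while additionally exhibiting functions on which the expression is attained, i.e., the tightness that your version certifies only implicitly (your final appeal to comparing constants with Theorem~\ref{FDM} does not by itself establish attainment).
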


\section{Proof of Theorem~\ref{hyperplanes}: Information-Theoretic Lower Bound}
In this section we provide the detailed proof for Theorem~\ref{hyperplanes}. The sketch of the proof is as following. We start by defining a set of sum of ensemble functions that is parameterized by a discrete set. Then, we show that approximating the gradient is as hard as recovering the discrete-valued parameters. Then, we define a specific zero-order stochastic oracle and analyze its related KL divergence. Finally, we use Fano's inequality and combine all the results to prove our claim.
\subsection{Defining Function Space}

Define $\mathcal{V}\subseteq\{-1,+1\}^d$ such that
for any $\alpha,\beta\in\mathcal{V}$, if $\alpha\ne\beta$ then
\begin{equation}\label{defV}
\sum_{i=1}^{d}1[\alpha_i\ne\beta_i]\geq d/4
\end{equation}
Then using a binomial counting construction from \cite{Matousek00}, it is possible to construct a set $\mathcal{V}$ with cardinality
\begin{equation}\label{cardinalityV}
|\mathcal{V}|\geq(2/\sqrt{e})^{d/2}
\end{equation}

Now we want to define a set of sum of ensemble functions $\mathcal{G}(\delta, h)$ to be the base class of functions that is parameterized by $\alpha$ where $\delta \in \mathbb{R}$ and $h_i:\mathbb{R} \rightarrow \mathbb{R}$. Note that $\mathcal{G} \subset \mathcal{F}$. Thus, it is sufficient to prove the lowerbound for the subset $\mathcal{G}$ since this implies a lower bound over the set $\mathcal{F}$.
Let $g_\alpha \in \mathcal{G}(\delta, h)$ be defined as
\begin{equation}\label{galpha}
g_\alpha(x):=\frac{\delta}{d}\sum_{i=1}^{d}\alpha_i h_i(x(i))
=\frac{\delta}{d}\langle \alpha, H(x) \rangle
\end{equation}
where $x(i)$ refers to the $i$-th coordinate of $x$, $\langle \cdot , \cdot \rangle$ denotes the dot product, and $H(x)=[h_1(x(1)), h_2(x(2)), \dots, h_d(x(d))]^\top$.
Thus $\nabla g_\alpha(x) = \frac{\delta}{d} (\alpha \odot \nabla H(x))$, where the operator $\odot$ refers to the Hadamard product. The term $\delta$ takes into account, for a fixed $h$, how ``close" we want our gradients for different functions in the base class.

\subsection{Minimum Distance between Functions in the Class}

Let $\psi(\mathcal{G}(\delta, h))$ denote the minimum discrepancy in the $\Vert \cdot \Vert _1$ norm between the gradient at the minimum point $x^* \in \mathbb{S}$ of any two functions in $\mathcal{G}(\delta, h)$. Let $\alpha, \beta \in \mathcal{V}$ where $\alpha \ne \beta$. We formally define the discrepancy as

\begin{equation}\label{psiDefinition}
\psi(\mathcal{G}(\delta, h))
:=\inf_{\alpha,\beta\in\mathcal{V}}\inf_{x^* \in \mathbb{S}}{\Vert \nabla g_\alpha(x^*)-\nabla g_\beta(x^*)\Vert_1}
\end{equation}
Note that
\[
\begin{split}
\psi(\mathcal{G}(\delta, h))=&\inf_{\alpha,\beta\in\mathcal{V}}\inf_{x^* \in \mathbb{S}}
\frac{\delta}{d}\Vert (\alpha - \beta)\odot \nabla H(x^*) \Vert_1
\\=&\inf_{\alpha,\beta\in\mathcal{V}}\inf_{x^* \in \mathbb{S}}
\frac{\delta}{d}\sum_{i=1}^{d}|(\alpha_i-\beta_i)\cdot h_i'(x^*(i))|
\\=&\inf_{\alpha,\beta\in\mathcal{V}}\inf_{x^* \in \mathbb{S}}
\frac{2\delta}{d}\sum_{i=1}^{d}1[\alpha_i\ne\beta_i]\cdot|h_i'(x^*(i))|
\\ \geq &\inf_{\alpha,\beta\in\mathcal{V}}\inf_{x^* \in \mathbb{S}}
\frac{2\delta}{d}\sum_{i=1}^{d}1[\alpha_i\ne\beta_i]\cdot\inf_{j}{|h_j'(x^*(j))|}
\\\geq &\inf_{x^* \in \mathbb{S}}\frac{2\delta}{d}\cdot\inf_{j}{|h_j'(x^*(j))|}
\cdot\frac{d}{4}
\\=&\frac{\delta}{2}\inf_{x^* \in \mathbb{S}}\inf_{j}{|h_j'(x^*(j))|}
\end{split}
\]
where the last inequality follows from equation \eqref{defV}.
Thus
\begin{equation}\label{psilowerbound}
\psi(\mathcal{G}(\delta, h)) \geq \frac{\delta}{2}\inf_{x^* \in \mathbb{S}}\inf_{j}{|h_j'(x^*(j))|}
\end{equation}
which means that for any $\alpha, \beta \in \mathcal{V}, \alpha \ne \beta $ and $x \in \mathbb{S}$, we have
$
{\Vert \nabla g_\alpha(x)-\nabla g_\beta(x)\Vert_1} \geq \frac{\delta}{2}\inf_{x^* \in \mathbb{S}}\inf_{j}{|h_j'(x^*(j))|}
$.
Next we show that there can be at most one $\nabla g_\alpha$ in any $\ell_1$-ball centered anywhere with radius related to equation \eqref{psiDefinition}.
\begin{lemma}
\label{lemmapsi}
If
\begin{equation}\label{psiassumption}
\inf_{x^* \in \mathbb{S}}\inf_{j}{|h_j'(x^*(j))|}>0
\end{equation}
then for any $\widehat{\nabla}\in\mathbb{R}^d$ and any $x^* \in \mathbb{S}$ there can be at most one $\alpha\in\mathcal{V}$ where $g_\alpha \in \mathcal{G}(\delta, h)$ such that
\begin{equation}\label{onlyonealpha}
\Vert \widehat{\nabla}-\nabla g_\alpha(x^*) \Vert_1
\leq \frac{1}{3} \psi(\mathcal{G}(\delta, h))
\end{equation}
\end{lemma}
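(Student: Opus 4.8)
The plan is to argue by contradiction using the triangle inequality together with the separation guarantee already established in \eqref{psilowerbound}. First I would suppose, for the sake of contradiction, that for some fixed $\widehat{\nabla} \in \mathbb{R}^d$ and some fixed $x^* \in \mathbb{S}$ there exist two \emph{distinct} parameters $\alpha, \beta \in \mathcal{V}$ with $\alpha \neq \beta$, each giving a function $g_\alpha, g_\beta \in \mathcal{G}(\delta, h)$, such that both satisfy the ball condition \eqref{onlyonealpha}; that is, both $\Vert \widehat{\nabla} - \nabla g_\alpha(x^*) \Vert_1 \leq \tfrac{1}{3}\psi(\mathcal{G}(\delta,h))$ and $\Vert \widehat{\nabla} - \nabla g_\beta(x^*) \Vert_1 \leq \tfrac{1}{3}\psi(\mathcal{G}(\delta,h))$ hold.

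Next I would apply the triangle inequality in the $\ell_1$ norm to route the distance between the two gradients through the common center $\widehat{\nabla}$, which yields $\Vert \nabla g_\alpha(x^*) - \nabla g_\beta(x^*) \Vert_1 \leq \tfrac{1}{3}\psi(\mathcal{G}(\delta,h)) + \tfrac{1}{3}\psi(\mathcal{G}(\delta,h)) = \tfrac{2}{3}\psi(\mathcal{G}(\delta,h))$. On the other hand, because $\psi(\mathcal{G}(\delta,h))$ is defined in \eqref{psiDefinition} as the infimum of precisely this gradient discrepancy over all distinct pairs in $\mathcal{V}$ and all points in $\mathbb{S}$, the particular pair $(\alpha,\beta)$ at the particular point $x^*$ must satisfy the reverse bound $\Vert \nabla g_\alpha(x^*) - \nabla g_\beta(x^*) \Vert_1 \geq \psi(\mathcal{G}(\delta,h))$.

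Chaining the two bounds then forces $\psi(\mathcal{G}(\delta,h)) \leq \tfrac{2}{3}\psi(\mathcal{G}(\delta,h))$, i.e., $\tfrac{1}{3}\psi(\mathcal{G}(\delta,h)) \leq 0$, so $\psi(\mathcal{G}(\delta,h)) \leq 0$. The final step is to rule this out using the hypothesis \eqref{psiassumption}: combined with the lower bound \eqref{psilowerbound} it gives $\psi(\mathcal{G}(\delta,h)) \geq \tfrac{\delta}{2}\inf_{x^*\in\mathbb{S}}\inf_{j}|h_j'(x^*(j))| > 0$, contradicting $\psi(\mathcal{G}(\delta,h)) \leq 0$. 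Hence no two distinct $\alpha$ can both satisfy \eqref{onlyonealpha}, so at most one exists, as claimed.

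The argument is essentially a routine packing-type bound, so I do not anticipate a serious technical obstacle. The only point requiring genuine care is that the separation $\psi(\mathcal{G}(\delta,h))$ be \emph{strictly} positive, which is exactly the role played by assumption \eqref{psiassumption}; without it the chained inequality $\psi \leq \tfrac{2}{3}\psi$ would be satisfied vacuously at $\psi = 0$ and the contradiction would collapse. It is therefore important to invoke \eqref{psiassumption} and \eqref{psilowerbound} explicitly rather than treat positivity as automatic.
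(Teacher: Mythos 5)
Your proposal is correct and follows essentially the same route as the paper's own proof: a contradiction argument combining the triangle inequality through $\widehat{\nabla}$, the definition of $\psi(\mathcal{G}(\delta,h))$ as an infimum over distinct pairs, and strict positivity of $\psi$ from \eqref{psilowerbound} and \eqref{psiassumption}. Your closing remark about why \eqref{psiassumption} is indispensable is also consistent with how the paper invokes it.
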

\begin{proof}
By contradiction, assume there exists some $\widehat{\nabla}\in\mathbb{R}^d$ and $x^* \in \mathbb{S}$ and $\alpha, \beta \in \mathcal{V}$ where $\alpha \ne \beta$ such that
$
\Vert \widehat{\nabla}-\nabla g_\alpha(x^*) \Vert_1
\leq \frac{1}{3} \psi(\mathcal{G}(\delta, h))
$
Assume that $\beta$ fulfills the same condition. Then
\[
\begin{split}
\psi(\mathcal{G}(\delta, h)) =& \inf_{a,b\in\mathcal{V}, a\ne b}\inf_{x \in \mathbb{S}}{\Vert \nabla g_a(x)-\nabla g_b(x)\Vert_1}
\\\leq&
\Vert \nabla g_\alpha(x^*) - \nabla g_\beta(x^*)\Vert _1
\\ \leq & \Vert \widehat{\nabla} - \nabla g_\alpha(x^*) \Vert_1 + \Vert \widehat{\nabla} - \nabla g_\beta(x^*)\Vert _1
\\ \leq & \frac{1}{3} \psi(\mathcal{G}(\delta, h)) + \frac{1}{3} \psi(\mathcal{G}(\delta, h))
\end{split}
\]
Therefore
$
\psi(\mathcal{G}(\delta, h)) \leq \frac{2}{3} \psi(\mathcal{G}(\delta, h))
$
which can only be true if $\psi(\mathcal{G}(\delta, h)) \leq 0$, however, from equations \eqref{psilowerbound} and \eqref{psiassumption} we get that $\psi(\mathcal{G}(\delta, h)) > 0$ thus we reach a contradiction and prove our claim.
\qedhere
\end{proof}

\subsection{Upper Bounding Probability of Estimator being Wrong}

Next, if the assumption of Lemma~\ref{lemmapsi} holds then we can further claim that if a model $\mathcal{M}_T$ can achieve a minimax error bounded as
\begin{equation}\label{riskassumption}
\EX_\phi[\epsilon(\mathcal{M}, \mathcal{G}(\delta, h), \mathbb{S}, \phi)]\leq\frac{1}{9} \psi(\mathcal{G}(\delta, h))
\end{equation}
Then we claim that model $\mathcal{M}$ can output a value $\widehat{\alpha}(\mathcal{M})$ to be the $\alpha\in\mathcal{V}$ where $\Vert \widehat{\nabla}_T - \nabla g_\alpha(x^*)\Vert_1\leq \frac{1}{3} \psi(\mathcal{G}(\delta, h))$ and if no $\alpha$ fulfills that condition then uniformly at random choose from $\mathcal{V}$. In addition, from Lemma~\ref{lemmapsi} either one valid $\alpha$ exists or none of the $\alpha$'s satisfy the condition. Thus, using Markov's inequality, we can claim that such an output from the model is wrong at most $\frac{1}{3}$ of the time.
\begin{lemma}
\label{lemmaupperbounderror}
If assumptions in Lemma~\ref{lemmapsi} hold then, if a model $\mathcal{M}$ attains an expected risk as in equation \eqref{riskassumption} then the model can construct an estimator $\widehat{\alpha}(\mathcal{M})$ to estimate the true vertex $\alpha\in\mathcal{V}$ with an error upper bounded as
\[
\begin{split}
\max_{\alpha^*\in\mathcal{V}} P_\phi[\widehat{\alpha}(\mathcal{M})\ne\alpha^*]\leq\frac{1}{3}
\end{split}
\]
which implies that if the model has an expected risk of $\EX_\phi[\epsilon(\mathcal{M}, \mathcal{G}(\delta, h), \mathbb{S}, \phi)]\leq\frac{1}{9} \psi(\mathcal{G}(\delta, h))$ then it will only fail to retrieve the correct $g_\alpha$ with probability at most $\frac{1}{3}$.
\end{lemma}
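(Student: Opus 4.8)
The plan is to construct the vertex estimator $\widehat{\alpha}(\mathcal{M})$ directly from the model's gradient output and then bound its failure probability by a single application of Markov's inequality to the expected risk. First I would set $\widehat{\nabla}_T := \nabla\widehat{f}(x^*)_\mathcal{M}$ and define $\widehat{\alpha}(\mathcal{M})$ to be the unique $\alpha \in \mathcal{V}$ satisfying $\Vert \widehat{\nabla}_T - \nabla g_\alpha(x^*)\Vert_1 \leq \frac{1}{3}\psi(\mathcal{G}(\delta,h))$, if such an $\alpha$ exists, and otherwise a uniformly random element of $\mathcal{V}$. Lemma~\ref{lemmapsi} guarantees that this rule is well-defined, since at most one vertex can fall in the $\frac{1}{3}\psi$-ball around any fixed $\widehat{\nabla}_T$.

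The crux is a decoding-correctness implication. Suppose $\alpha^* \in \mathcal{V}$ is the true data-generating vertex. If the estimate is close, i.e. $\Vert \widehat{\nabla}_T - \nabla g_{\alpha^*}(x^*)\Vert_1 \leq \frac{1}{3}\psi$, then $\alpha^*$ itself satisfies the selection condition; by the uniqueness in Lemma~\ref{lemmapsi} it is the only vertex that does, so the rule returns exactly $\widehat{\alpha}(\mathcal{M}) = \alpha^*$. Equivalently, the estimator can err only on the event $\{\Vert \widehat{\nabla}_T - \nabla g_{\alpha^*}(x^*)\Vert_1 > \frac{1}{3}\psi\}$, so it suffices to bound the probability of this deviation.

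To control that probability, I would apply Markov's inequality to the nonnegative random variable $\Vert \widehat{\nabla}_T - \nabla g_{\alpha^*}(x^*)\Vert_1$, giving $P_\phi[\widehat{\alpha}(\mathcal{M}) \ne \alpha^*] \leq P_\phi[\Vert \widehat{\nabla}_T - \nabla g_{\alpha^*}(x^*)\Vert_1 > \frac{1}{3}\psi] \leq 3\,\EX_\phi[\Vert \widehat{\nabla}_T - \nabla g_{\alpha^*}(x^*)\Vert_1]/\psi$. The expectation in the numerator is bounded by the maximum risk $\epsilon(\mathcal{M}, \mathcal{G}(\delta,h), \mathbb{S}, \phi)$, because that quantity is a supremum over all functions $g_\alpha \in \mathcal{G}(\delta,h)$ and all points $x^* \in \mathbb{S}$, and $g_{\alpha^*}$ with its associated $x^*$ is one such pair. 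Invoking the hypothesis from equation \eqref{riskassumption} that this risk is at most $\frac{1}{9}\psi$, the ratio collapses to $3 \cdot \frac{1}{9} = \frac{1}{3}$. Since nothing in the argument depended on the particular choice of $\alpha^*$, the same bound holds uniformly, and taking the maximum over $\alpha^* \in \mathcal{V}$ preserves it.

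The step needing the most care is matching the two radii: the selection tolerance $\frac{1}{3}\psi$ and the risk threshold $\frac{1}{9}\psi$ must be chosen together so that the Markov ratio lands exactly at $\frac{1}{3}$. The real content, however, is the uniqueness furnished by Lemma~\ref{lemmapsi}, which is what upgrades the purely metric statement ``the estimate is within $\frac{1}{3}\psi$ of the truth'' into the combinatorial statement ``the correct vertex is recovered''; without it, proximity alone would not pin down a single $\alpha^*$.
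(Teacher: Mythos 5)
Your proof is correct and takes essentially the same route as the paper: build the decoder from the uniqueness in Lemma~\ref{lemmapsi}, observe that the error event is contained in the event that the gradient estimate deviates by more than $\frac{1}{3}\psi(\mathcal{G}(\delta,h))$ from the truth, and finish with Markov's inequality combined with the $\frac{1}{9}\psi$ risk assumption of equation \eqref{riskassumption}. If anything, your writeup is slightly more careful than the paper's, since you apply Markov to the realized loss $\Vert \widehat{\nabla}_T - \nabla g_{\alpha^*}(x^*)\Vert_1$ and then pass to the supremum defining the maximum risk, whereas the paper applies Markov directly to the symbol $\epsilon(\mathcal{M},\mathcal{G}(\delta,h),\mathbb{S},\phi)$, which is formally a deterministic quantity.
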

\begin{proof}
From Lemma~\ref{lemmapsi}, a maximum of one $\alpha$ exists such that inequality \eqref{onlyonealpha} is fulfilled, thus the model chooses that $\alpha$ as its estimator which means that the model is possibly incorrect when the output $\widehat{\nabla}$ from the model does not fulfill inequality \eqref{onlyonealpha} for the correct underlying $\alpha$, thus the probability of the estimator being wrong is
\[
P_\phi[\widehat{\alpha}(\mathcal{M})\ne\alpha^*] \leq P_\phi\left[\epsilon(\mathcal{M}, \mathcal{G}(\delta, h), \mathbb{S}, \phi) \geq \frac{1}{3} \psi(\mathcal{G}(\delta, h))\right]
\]
Furthermore, from Markov's inequality
\[
\begin{split}
&P_\phi\left[\epsilon(\mathcal{M}, \mathcal{G}(\delta, h), \mathbb{S}, \phi) \geq \frac{1}{3} \psi(\mathcal{G}(\delta, h))\right]
\\&\leq \frac{\EX[\epsilon(\mathcal{M}, \mathcal{G}(\delta, h), \mathbb{S}, \phi)]}{\frac{1}{3} \psi(\mathcal{G}(\delta, h))} \leq \frac{1}{3}
\end{split}
\]
Taking the maximum over $\alpha^*\in\mathcal{V}$ proves our lemma.
\qedhere
\end{proof}
\subsection{Defining an Oracle}

Now we will begin to define a specific oracle that is inspired from \cite{Agarwal00} albeit in the context of convex optimization with access to exact gradients. Let the oracle $\phi\in\mathcal{O}$ be defined as follows. At each time $t=1,\dots,T$, when the model queries position $x_t$, the oracle chooses uniformly at random a number from $i=1,\dots,d$. Then the oracle chooses $b$ at random from a Bernoulli distribution with parameter $p=\frac{1}{2}+\alpha_i\delta$ and returns $\left[b\cdot\frac{h_i(x(i))}{2}+
(1-b)\cdot\frac{-h_i(x(i))}{2}\right]$. Thus the expectation of the oracle on a function $g_\alpha \in \mathcal{G}(\delta, h)$ is
\[
\begin{split}
&\EX[\phi(x_t, g_\alpha)]
\\&= \frac{1}{d}\sum_{i=1}^{d}
{(1/2+\alpha_i\delta)\frac{h_i(x_t(i))}{2}
+(1/2-\alpha_i\delta)\frac{-h_i(x_t(i))}{2}}
\\&=\frac{\delta}{d}\sum_{i=1}^{d}
{\alpha_ih_i(x_t(i))}=g_\alpha(x_t)
\end{split}
\]

Thus the oracle is an unbiased estimator of $g_\alpha$. The uncentered second order-moment is
\[
\begin{split}
&\EX[\phi(x_t, g_\alpha)^2]
\\&= \frac{1}{d}\sum_{i=1}^{d}
{(1/2+\alpha_i\delta)\frac{h_i(x_t(i))^2}{4}+
(1/2-\alpha_i\delta)\frac{h_i(x_t(i))^2}{4}}
\\&=\frac{1}{4d}\sum_{i=1}^{d}
{h_i(x_t(i))^2}
=\frac{1}{4d}\Vert H(x_t) \Vert_2^2
\end{split}
\]
Thus, the variance fulfills
\[
\begin{split}
\Var[\phi(x_t, g_\alpha)]
=&\EX[\phi(x_t, g_\alpha)^2] - \EX[\phi(x_t, g_\alpha)]^2
\\=& \frac{1}{4d}\Vert H(x_t) \Vert_2^2 - g_\alpha(x_t)^2
\leq \frac{1}{4d}\Vert H(x_t)\Vert_2^2
\end{split}
\]
Let $\mathbb{S}\subset \mathbb{R}^d$ be the $\ell_\infty$-ball of radius $r$ such that $\forall x\in\mathbb{S}: \Vert x \Vert _\infty \leq r$. Then the variance of the oracle is upper bounded by
$
\Var[\phi(x_t, g_\alpha)]
\leq \frac{1}{4d}\Vert H(x_t)\Vert_2^2
\leq \frac{1}{4} \sup_{i\in \{1,\dots,d\}}\sup_{x\in [-r,r]}{h_i(x)^2}
$.
Since we made the condition of $\Var[\phi]\leq \sigma^2$ to hold, we set the condition on $r$ to be
$
\sup_{i\in \{1,\dots,d\}}\sup_{x\in [-r,r]}{|h_i(x)|} \leq 2 \sigma
$.

Now, information-theoretic methods can help to upper-bound the KL divergence and to find a lower-bound on the probability of retrieving $g_\alpha$. Defining some notation, let $i_t$ denote the index $i$ the oracle chose at time $t$, and $b_t$ denote the value of b the oracle chose at time $t$. Thus, the information the oracle reveals can be fully characterized by $\{(i_1,b_1),(i_2,b_2),\dots,(i_T,b_T)\}$.

\subsection{Upper Bounding KL Divergence}

Denote the distribution of the information the oracle reveals as $\mathcal{P}^T_\alpha$ and the distribution at a single time $t$ as $\mathcal{P}_\alpha$. Note that $\mathcal{P}_\alpha(i,b)=\frac{1}{d}\mathcal{P}_{\alpha_i}(b)$ since $i$ is chosen uniformly at random. Now we can find an upper bound on the KL divergence between $\mathcal{P}_\alpha^T$ and $\mathcal{P}_{\alpha'}^T, \forall \alpha, \alpha'\in\mathcal{V}, \alpha \ne \alpha'$. More formally,
\[
\begin{split}
KL(\mathcal{P}^T_\alpha||\mathcal{P}^T_{\alpha'})&=
\sum_{t=1}^{T}
KL(\mathcal{P}_\alpha(i_t,b_t)||\mathcal{P}_{\alpha'}(i_t,b_t))
\\&=\sum_{t=1}^{T}\sum_{j=1}^{d}\frac{1}{d}
KL(\mathcal{P}_{\alpha_j}(b_t)||\mathcal{P}_{\alpha'_j}(b_t))
\end{split}
\]
Each term $KL(\mathcal{P}_{\alpha_j}(b_t)||\mathcal{P}_{\alpha'_j}(b_t))$ is at most the KL divergence between two Bernoulli distributions with parameters $p=(\frac{1}{2}+\delta)$ and $q=(\frac{1}{2}-\delta)$ respectively, which is upper bounded as follows
\[
\begin{split}
KL(\mathcal{P}_{\alpha_j}(b_t)||\mathcal{P}_{\alpha'_j}(b_t))
=&\left((\frac{1}{2}+\delta)\log{\frac{(\frac{1}{2}+\delta)}{(\frac{1}{2}-\delta)}}\right) \\&+\left((\frac{1}{2}-\delta)\log{\frac{(\frac{1}{2}-\delta)}{(\frac{1}{2}+\delta)}}\right)
\\=& 2 \delta \log\left(1+\frac{4\delta}{1-2\delta}\right)
\\\leq& \frac{2 \delta \cdot 4\delta}{1-2\delta}
\end{split}
\]
Then we have $\frac{8\delta^2}{1-2\delta} \leq 16\delta^2$ when $0<\delta \leq 1/4$. Thus, if $0<\delta \leq 1/4$ then
\begin{equation}\label{KL}
KL(\mathcal{P}^T_\alpha||\mathcal{P}^T_{\alpha'}) \leq 16T\delta^2
\end{equation}

\subsection{Lower Bounding Probability of Estimator being Wrong}

Suppose that a vector $\alpha^*$ is chosen uniformly at random from $\mathcal{V}$. Then for any model $\mathcal{M}\in\mathbb{M}_T$ that makes $T$ queries to oracle $\phi$. Next we show that if $\delta\leq\frac{1}{4}$, we can make use of Fano's inequality.
\begin{lemma}
\label{lemmafano}
Any model $\mathcal{M}$ that constructs any estimator $\widehat{\alpha}(\mathcal{M})$ to estimate the true vertex $\alpha\in\mathcal{V}$ from $T$ queries attains an error lower bounded as
\[
\begin{split}
\max_{\alpha^*\in\mathcal{V}}&P_\phi[\widehat{\alpha}(\mathcal{M}_T)\ne\alpha^*]
\geq
\left\{1-\frac{16T\delta^2+\log2}{\frac{d}{2}\log(2/\sqrt{e})}\right\}
\end{split}
\]
\end{lemma}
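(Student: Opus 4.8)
The plan is to apply Fano's inequality in its mutual-information form. Since $\alpha^*$ is drawn uniformly from $\mathcal{V}$ and the estimator $\widehat{\alpha}(\mathcal{M})$ is a (possibly randomized) function of the oracle's revealed information $Z := \{(i_t,b_t)\}_{t=1}^T$, the variables form the Markov chain $\alpha^* \to Z \to \widehat{\alpha}(\mathcal{M})$. First I would invoke the standard Fano bound: writing $P_e := P_\phi[\widehat{\alpha}(\mathcal{M}) \ne \alpha^*]$ for the average (Bayesian) error under the uniform prior, and using $H(\alpha^*) = \log|\mathcal{V}|$ together with the data-processing inequality $I(\alpha^*; \widehat{\alpha}) \le I(\alpha^*; Z)$, one obtains
\[
P_e \geq 1 - \frac{I(\alpha^*; Z) + \log 2}{\log|\mathcal{V}|},
\]
where the $\log 2$ absorbs the binary-entropy term $H(P_e) \le \log 2$.

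The next step is to control the mutual information from above using the pairwise KL divergences already computed in the previous subsection. Since $\alpha^*$ is uniform, $I(\alpha^*; Z) = \frac{1}{|\mathcal{V}|}\sum_{\alpha} KL(\mathcal{P}^T_\alpha \,\|\, \bar{\mathcal{P}})$, where $\bar{\mathcal{P}} := \frac{1}{|\mathcal{V}|}\sum_{\alpha'}\mathcal{P}^T_{\alpha'}$ is the mixture over all vertices. By convexity of the KL divergence in its second argument, each summand is bounded by $\frac{1}{|\mathcal{V}|}\sum_{\alpha'} KL(\mathcal{P}^T_\alpha \,\|\, \mathcal{P}^T_{\alpha'})$, so that $I(\alpha^*; Z) \le \max_{\alpha,\alpha'} KL(\mathcal{P}^T_\alpha \,\|\, \mathcal{P}^T_{\alpha'}) \le 16T\delta^2$ by equation~\eqref{KL}.

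Finally I would substitute the cardinality bound $\log|\mathcal{V}| \ge \frac{d}{2}\log(2/\sqrt{e})$ from equation~\eqref{cardinalityV} into the denominator, and note that $\max_{\alpha^* \in \mathcal{V}} P_\phi[\widehat{\alpha}(\mathcal{M}) \ne \alpha^*] \ge P_e$ since a maximum dominates the average, yielding the claimed bound
\[
\max_{\alpha^* \in \mathcal{V}} P_\phi[\widehat{\alpha}(\mathcal{M}_T) \ne \alpha^*] \geq 1 - \frac{16T\delta^2 + \log 2}{\frac{d}{2}\log(2/\sqrt{e})}.
\]
The argument is largely routine once the pieces are assembled; the one delicate point is the mutual-information step, where convexity is needed to pass from the intractable mixture divergence $KL(\mathcal{P}^T_\alpha \,\|\, \bar{\mathcal{P}})$ to the maximum pairwise divergence, for which the clean bound \eqref{KL} is available. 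I should also confirm the condition $0 < \delta \le 1/4$ under which \eqref{KL} holds, as flagged in the text preceding the lemma.
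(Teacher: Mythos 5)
Your proposal is correct, and it reaches the stated bound along the same conceptual path as the paper (Fano's inequality, the pairwise KL bound \eqref{KL}, and the cardinality bound \eqref{cardinalityV}); the difference is in how much of the machinery you derive rather than cite. The paper's proof is a one-step application of the packaged form of Fano's inequality from Yu (1997), which is stated directly in terms of the \emph{maximum pairwise} KL divergence and the \emph{maximum} probability of error, so the proof reduces to substituting $\max_{\alpha,\beta} KL(\mathcal{P}^T_\alpha\|\mathcal{P}^T_\beta)\leq 16T\delta^2$ and $\log|\mathcal{V}|\geq \frac{d}{2}\log(2/\sqrt{e})$. You instead rebuild that packaged lemma from the elementary mutual-information form of Fano: the Markov chain $\alpha^*\to Z\to\widehat{\alpha}$ with data processing, the identity $I(\alpha^*;Z)=\frac{1}{|\mathcal{V}|}\sum_\alpha KL(\mathcal{P}^T_\alpha\|\bar{\mathcal{P}})$ under the uniform prior, convexity of KL in its second argument to pass from the mixture $\bar{\mathcal{P}}$ to the maximum pairwise divergence, and finally $\max_{\alpha^*}P_\phi[\widehat{\alpha}\ne\alpha^*]\geq P_e$ to go from average to worst case. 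Each step is valid, so your argument is a self-contained proof of exactly the inequality the paper imports as a black box. What the paper's route buys is brevity; what yours buys is transparency: it makes explicit where the uniform prior enters, why adaptivity of the queries is harmless (the estimator is a function of the revealed $(i_t,b_t)$ pairs alone), and why bounding only pairwise divergences suffices. You also correctly flag that \eqref{KL}, and hence the lemma, requires $0<\delta\leq 1/4$, a condition the paper states in the sentence preceding the lemma rather than inside it.
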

\begin{proof}
Using Fano's inequality \cite{Yu97} and equations \eqref{cardinalityV} and \eqref{KL}, we have that
\[
\begin{split}
\max_{\alpha^*\in\mathcal{V}}&P_\phi[\widehat{\alpha}(\mathcal{M}_T)\ne\alpha^*]
\\&\geq\left\{1-\frac
{\max_{\alpha,\beta\in\mathcal{V}}\{KL(\mathcal{P}^T_\alpha||\mathcal{P}^T_{\beta})\}+\log2}
{\log|\mathcal{V}|}\right\}
\\&\geq
\left\{1-\frac
{16T\delta^2+\log2}
{\log(2/\sqrt{e})^{d/2}}\right\}
\\&\geq
\left\{1-\frac
{16T\delta^2+\log2}
{\frac{d}{2}\log(2/\sqrt{e})}\right\}
\end{split}
\]
\qedhere
\end{proof}

\subsection{Concluding the Proof of Theorem~\ref{hyperplanes}}

Now by setting $h_i(x)=h(x)=x$ and for some $\delta > 0$, we analyze the set $G(\delta, h)$ which can be interpreted as a set of hyperplanes in $\mathbb{R}^d$, each dimension with a slope of $+\delta \text{ or } -\delta$. Now we make sure our conditions hold by witnessing that
$
\delta\inf_{x^* \in \mathbb{S}}\inf_{j}{|h_j'(x^*(j))|}=\delta\inf_{x^* \in \mathbb{S}}\inf_{j}{|1|}=\delta>0
$.
Thus, the conditions for Lemma~\ref{lemmapsi} and Lemma~\ref{lemmaupperbounderror} hold. By setting $r=2\sigma$ for the radius of the $\ell_\infty$-ball of the input space we have
$
\sup_{x\in [-r,r]}{|h(x)|} = \sup_{x\in [-2\sigma,2\sigma]}{|x|} = 2 \sigma \leq 2 \sigma
$.
Thus, $\Var[\phi]\leq \sigma^2$ holds.
Then, let $k=\inf_{x^* \in \mathbb{S}}\inf_{j}{|h_j'(x^*(j))|}=1$ and by setting $\epsilon=k\delta/18$, if a model $\mathcal{M}$ achieves
\[
\EX[\epsilon^*(\mathcal{M}_T, \mathcal{G}(\delta, h), \mathbb{S}, \phi)]\leq\frac{\delta}{18} k=\epsilon
\]
and from Lemma~\ref{lemmaupperbounderror}, we have that $\max_{\alpha^*\in\mathcal{V}}P_\phi[\widehat{\alpha}(\mathcal{M}_T)\ne\alpha^*]\leq\frac{1}{3}$.
In addition, from Lemma~\ref{lemmafano} we have that $\max_{\alpha^*\in\mathcal{V}}P_\phi[\widehat{\alpha}(\mathcal{M}_T)\ne\alpha^*]\geq
\left\{1-\frac
{16T(18\epsilon/k)^2+\log2}
{\frac{d}{2}\log(2/\sqrt{e})}\right\}$. Combining the two terms we get
\[
\begin{split}
&\frac{1}{3}\geq
\left\{1-\frac
{16T(18\epsilon/k)^2+\log2}
{\frac{d}{2}\log(2/\sqrt{e})}\right\}
\\ \Rightarrow \;\;\; &
\epsilon
\geq \sqrt{\frac{\log(2/\sqrt{e})\cdot d-3\log2}{324\cdot 3\cdot 16T}\cdot k^2}
\\ \Rightarrow \;\;\; &
T=\Omega(\frac{d}{\epsilon^2})
\text{ and }
\epsilon \geq c\sqrt{\frac{d}{T}}
\end{split}
\]

Hence, the main theorem is proven, since we conclude that
\[
\begin{split}
\epsilon^*(\mathcal{F}, \mathbb{S}, \phi)
=&\inf_{\mathcal{M}\in\mathbb{M}_T} \sup_{f\in\mathcal{F}}
\sup_{x^*\in\mathbb{S}}
\EX_\phi[\Vert \widehat{\nabla}_T - \nabla f(x^*) \Vert _ 1 ]
\\\geq& \sqrt{\frac{\log(2/\sqrt{e})\cdot d-3\log2}{324\cdot 3\cdot 16T}\cdot k^2}
\end{split}
\]

\section{Proof of Theorem~\ref{FDM}: Rate of Finite Differences for a Bounded-Variance Oracle}
We begin by stating the output estimate from the finite difference method (FDM) and state the $\ell_1$-error. Given any oracle $\phi(x, f)$ that outputs noisy unbiased values for the $d$-dimensional function $f$ such that $E[\phi(x, f)]=f(x)$ and $\Var[\phi(x, f)]\leq \sigma^2$. Assume that the oracle can be queried up to $T$ times and that the function $f$ is three times differentiable. Applying FDM on $\phi(x, f)$ gives the output $\nabla \widehat{f}(x^*)_\mathcal{FDM}=[\widehat{\nabla}_1, \widehat{\nabla}_2,\dots,\widehat{\nabla}_d]^\top$ and for each dimension $i$, the value $\widehat{\nabla}_i$ is calculated as follows:
$
\widehat{\nabla}_i = \frac{1}{T/2d}
\sum_{j=1}^{T/2d}
{\left(\frac{\phi^{(j)}(x^*+he_i, f)-\phi^{(j)}(x^*-he_i, f)}{2h}\right)}
$.
The error of FDM given any valid oracle $\phi$ and point $x^*\in\mathbb{S}$ is given as follows
\[
\begin{split}
&\epsilon(\mathcal{FDM}, \mathcal{F}, \mathbb{S}, \phi) = \EX_\phi [\Vert \nabla \widehat{f}(x^*)_\mathcal{FDM} - \nabla f(x^*) \Vert _ 1 ]
\\&=
\EX_\phi
\left[\sum_{i=1}^{d}
\left\vert
\frac{1}{T/2d}\sum_{j=1}^{T/2d}
{V_i^{(j)}}
- \nabla f(x^*)_i
\right\vert \right]
\end{split}
\]
where $V_i^{(j)} := \frac{\phi^{(j)}(x^*+he_i, f)-\phi^{(j)}(x^*-he_i, f)}{2h}$, and $h$ is some constant that the method chooses. We are interested on finding an upper bound for $\epsilon(\mathcal{FDM}, \mathcal{F}, \mathbb{S}, \phi)$ for any oracle $\phi$ and any function $f$ that is three times differentiable and any point $x^*\in\mathbb{S}$.
Recall that $f^{iii}$ represents the third derivative of $f$ with respect to the $i$-th variable.
By Taylor series expansion up to the third order we know that
\[
\frac{f(x^*+he_i)-f(x^*-he_i)}{2h}=\nabla f(x^*)_i+\frac{h^2}{6}f^{iii}(\xi_i)
\]
where $e_i = [0,\dots,0,1,0,\dots,0]^\top$ and the entry with value $1$ is at the $i$-th index. Note that $\xi_i \in \mathbb{R}^d$ belongs to the line segment between $x^*-he_i$ and $x^*+he_i$, and only depends on the value of $x^*$ and the function $f$.

Let $W_i^{(j)} := V_i^{(j)} - \frac{h^2}{6}f^{iii}(\xi_i) = \frac{\phi^{(j)}(x^*+he_i, f)-\phi^{(j)}(x^*-he_i, f)}{2h} - \frac{h^2}{6}f^{iii}(\xi_i)$.
Since $E[\phi(x, f)]=f(x)$, we have that $E_\phi[W_i^{(j)}] = \nabla f(x^*)_i$.
Similarly, since $\Var[\phi(x, f)]\leq \sigma^2$, we have that $\Var[W_i^{(j)}] = \frac{1}{4h^2}(\Var[\phi(x^*+he_i, f)] + \Var[\phi(x^*-he_i, f)]) \leq \frac{\sigma^2}{2h^2}$.
Let $n=T/2d$.
From Chebyshev's inequality we get
\[
\begin{split}
&P\left(\left|
\frac{1}{n}\sum_{j=1}^{n}
W_i^{(j)}-\nabla f(x^*)_i\right|
>z\right)
\\&\leq \min\left(\frac{\sum_{j=1}^{n}\Var[W_i^{(j)}]}{n^2 z^2}, 1\right)
\\&\leq \min\left(\frac{\sigma^2}{2nh^2z^2}, 1\right)
\end{split}
\]
Let $Z=\left|\frac{1}{T/2d}\sum_{j=1}^{T/2d} W_i^{(j)}-\nabla f(x^*)_i\right|$.
We then apply the layer-cake representation for the expected value of the non-negative random variable $Z$ and we get
\[
\begin{split}
E[Z]&=\int_{0}^{\infty}P(Z>z)dz \leq \int_{0}^{\infty}\min\left(\frac{\sigma^2}{2nh^2z^2}, 1\right)dz
\\&\leq \int_{0}^{\sigma/(h\sqrt{2n})}1dz
+ \frac{\sigma^2}{2nh^2} \int_{\sigma/(h\sqrt{2n})}^{\infty} \frac{1}{z^2} dz
\\&= \frac{\sigma}{h\sqrt{2n}} + \frac{\sigma^2}{2nh^2}\cdot\frac{h\sqrt{2n}}{\sigma} = \frac{\sigma\sqrt{2}}{h\sqrt{n}}
= \frac{2\sigma\sqrt{d}}{h\sqrt{T}}
\end{split}
\]
Thus we get the upper bound on the expectation, and we can take the $f^{iii}(\xi_i)$ term out of the expectation as follows.
We have
\[
\begin{split}
&E_\phi\left[\left|\frac{1}{T/2d}\sum_{j=1}^{T/2d}\left(V_i^{(j)} - \frac{h^2}{6}f^{iii}(\xi_i)\right)-\nabla f(x^*)_i\right|\right]
\\&\leq \frac{2\sigma\sqrt{d}}{h\sqrt{T}}
\\ \Rightarrow \;\;\; &
E_\phi\left[\left|\frac{1}{T/2d}\sum_{j=1}^{T/2d}V_i^{(j)}-\nabla f(x^*)_i\right|\right]
\\&\leq \frac{2\sigma\sqrt{d}}{h\sqrt{T}}+\frac{h^2}{6} \vert f^{iii}(\xi_i) \vert
\leq \frac{2\sigma\sqrt{d}}{h\sqrt{T}}+\frac{h^2}{6} K
\end{split}
\]
where $K=\sup_{i\in \{1,\dots,d\}}\sup_{x\in\mathbb{S}}|f^{iii}(x)|$.
Now we can use this upper bound on the expectation of the estimated error on a single dimension $i$ to calculate the upper bound of the $\ell_1$-error of FDM.
We have
\[
\begin{split}
&\epsilon(\mathcal{FDM}, \mathcal{F}, \mathbb{S}, \phi) 
\\&= \EX_\phi [\Vert \nabla \widehat{f}(x^*)_\mathcal{FDM} - \nabla f(x^*) \Vert _ 1 ]
\\&=
\EX_\phi
\left[\sum_{i=1}^{d}
{\left\vert \frac{1}{T/2d}\sum_{j=1}^{T/2d}
V_i^{(j)} - \nabla f(x^*)_i \right\vert} \right]
\\&=
\sum_{i=1}^{d}
\EX_\phi
\left[
{\left\vert
\frac{1}{T/2d}\sum_{j=1}^{T/2d}
V_i^{(j)} - \nabla f(x^*)_i
\right\vert} \right]
\\&\leq \frac{2\sigma d\sqrt{d}}{h\sqrt{T}}+\frac{h^2}{6}dK
\end{split}
\]
Now we can use a value $h>0$ that minimizes the $\ell_1$-error of FDM which is chosen to be
$
h=\sqrt[3]{\frac{2\sigma d\sqrt{d}/\sqrt{T}}{2Kd/6}}
$
provided that $K>0$.
Thus the upperbound on the $\ell_1$-error now becomes
\[
\begin{split}
\epsilon(\mathcal{FDM}, \mathcal{F}, \mathbb{S}, \phi)
\leq& \frac{2\sigma d\sqrt{d}}{h\sqrt{T}}+\frac{h^2}{6}dK
\\=&d^{(4/3)} \sqrt[3]{\frac{1}{T}} \sqrt[3]{\frac{9\sigma^2K}{2}}
\end{split}
\]
Furthermore, if $K=0$, we then use a value $h>0$ that minimizes the $\ell_1$-error of FDM while ignoring the $K$ term thus $h$ is chosen to be that largest possible value it can take. Such a value, theoretically speaking, is the minimum distance from the point we want to estimate $x^*$ to the region $\mathbb{S}$, which is a constant and for clarity of presentation we will call $h_r$. More formally,
\[
\begin{split}
h_r^+ &= \inf_{i\in \{1,\dots,d\}} \sup_{c\in\mathbb{R}} c\cdot 1[(x^*+ce_i)\in\mathbb{S}]
\\
h_r^- &= \inf_{i\in \{1,\dots,d\}} \sup_{c\in\mathbb{R}} c\cdot 1[(x^*-ce_i)\in\mathbb{S}]
\\
h_r &= \min{\{h_r^+, h_r^-\}}
\end{split}
\]
Thus the upperbound when $K=0$ becomes
\[
\epsilon(\mathcal{FDM}, \mathcal{F}, \mathbb{S}, \phi)\leq d^{(3/2)}\frac{1}{\sqrt{T}}\frac{2\sigma}{h_r}
\]

\section{Proof of Theorem~\ref{FDMgaussian}: Rate of Finite Differences for a Gaussian Oracle}

Recall that the proof of Theorem~\ref{FDM} considers a general oracle $\phi$ where $E[\phi(x, f)]=f(x)$, and $\Var[\phi(x, f)]\leq \sigma^2$, and then uses Chebyshev's inequality.
Here we will avoid using any inequality.

Recall $V_i^{(j)} := \frac{\phi^{(j)}(x^*+he_i, f)-\phi^{(j)}(x^*-he_i, f)}{2h}$.
Now, for a tight result, assume a Gaussian oracle $\phi$, that is, $\phi(x, f)$ is Gaussian distributed with mean $f(x)$ and variance $\sigma^2$.
Formally, $\phi(x, f) \sim \mathcal{N}(f(x),\sigma^2)$ and thus $\EX[\phi(x, f)] = f(x)$ and $\Var[\phi(x, f)] = \sigma^2$.

First, consider the case $K=0$.
For constants $c_1,\dots,c_d,c'_1,\dots,c'_d>0$, consider a family of functions of the form:
\[
f(x) = \sum_{i=1}^d \left( K \frac{x_i^3}{6} + c_i \frac{x_i^2}{2} + c'_i x_i \right)
\]
By Taylor series expansion up to the third order and since $f^{iii}(x)=K$ for every $x$, we have
\[
\frac{f(x^*+he_i)-f(x^*-he_i)}{2h}=\nabla f(x^*)_i+\frac{h^2}{6}K
\]
Note that $\EX[V_i^{(j)}] = \frac{f(x^*+he_i, f)-f(x^*-he_i, f)}{2h}=\nabla f(x^*)_i+\frac{h^2}{6}K$.
Note that $\Var[V_i^{(j)}] = \frac{1}{4h^2}(\Var[\phi(x^*+he_i, f)] + \Var[\phi(x^*-he_i, f)]) = \frac{\sigma^2}{2h^2}$.
Therefore
\[
\EX\left[\frac{1}{T/2d}\sum_{j=1}^{T/2d}V_i^{(j)}\right] = \nabla f(x^*)_i+\frac{h^2}{6}K
\]
and
\[
\Var\left[\frac{1}{T/2d}\sum_{j=1}^{T/2d}V_i^{(j)}\right] = \frac{\sigma^2 d}{h^2 T}
\]
For a Gaussian variable $Z \sim \mathcal{N}(\mu,s^2)$ we know that $\EX[|Z|] = s \exp(-\mu^2/(2 s^2)) \sqrt{2/\pi} + \mu \, \erf(\mu/(\sqrt{2}s))$ where $\erf$ is the Gauss error function, thus
\[
\begin{split}
&E_\phi\left[\left|\frac{1}{T/2d}\sum_{j=1}^{T/2d}V_i^{(j)}-\nabla f(x^*)_i\right|\right]
\\&= \frac{\sigma\sqrt{d}}{h\sqrt{T}} \exp\hns\left(-\frac{K^2 h^6 T}{72 \sigma^2 d}\right) \sqrt{\frac{2}{\pi}}+\frac{h^2}{6} K \, \erf\hns\left(\frac{K h^3 \sqrt{T}}{\sqrt{72} \sigma d}\right)
\end{split}
\]
Thus
\begin{align} \label{eq:FDM_tight}
&\epsilon(\mathcal{FDM}, \mathcal{F}, \mathbb{S}, \phi)
\nonumber\\&= \EX_\phi [\Vert \nabla \widehat{f}(x^*)_\mathcal{FDM} - \nabla f(x^*) \Vert _ 1 ]
\nonumber\\&=
\EX_\phi
\left[\sum_{i=1}^{d}
{\left\vert \frac{1}{T/2d}\sum_{j=1}^{T/2d}
V_i^{(j)} - \nabla f(x^*)_i \right\vert} \right]
\nonumber\\&=
\sum_{i=1}^{d}
\EX_\phi
\left[
{\left\vert
\frac{1}{T/2d}\sum_{j=1}^{T/2d}
V_i^{(j)} - \nabla f(x^*)_i
\right\vert} \right]
\nonumber\\&=\frac{\sigma d\sqrt{d}}{h\sqrt{T}} \exp\hns\left(-\frac{K^2 h^6 T}{72 \sigma^2 d}\right) \sqrt{\frac{2}{\pi}}+\frac{h^2}{6} d K \, \erf\hns\left(\frac{K h^3 \sqrt{T}}{\sqrt{72} \sigma d}\right)
\end{align}
Unfortunately, it is difficult to find a value $h>0$ that minimizes the $\ell_1$-error of FDM above.
Since for $z \geq 0$ we have $\exp(-z) \in [0,1]$ and $\erf(z) \in [0,1]$, we will consider minimizing the expression
\[
\frac{\sigma d\sqrt{d}}{h\sqrt{T}} \sqrt{\frac{2}{\pi}}+\frac{h^2}{6} d K
\]
which leads to choose
$
h=\frac{d^{1/6} \sigma^{1/3}}{K^{1/3} T^{1/6}} \left(\frac{18}{\pi}\right)^{1/6}
$.
Plugging this back into equation \eqref{eq:FDM_tight}, we get
\[
\begin{split}
&\epsilon(\mathcal{FDM}, \mathcal{F}, \mathbb{S}, \phi)
\\& =\frac{\sigma d\sqrt{d}}{h\sqrt{T}} \exp\hns\left(-\frac{K^2 h^6 T}{72 \sigma^2 d}\right) \sqrt{\frac{2}{\pi}}+\frac{h^2}{6} d K \, \erf\hns\left(\frac{K h^3 \sqrt{T}}{\sqrt{72} \sigma d}\right)
\\& = d^{(4/3)} \sqrt[3]{\frac{1}{T}} \sqrt[3]{\frac{\sigma^2K}{12\pi}} \left(2\exp\hns\left(-\frac{\pi}{4}\right)+\erf\hns\left(\frac{1}{2\sqrt{\pi}}\right)\right)
\end{split}
\]

Second, consider the case $K>0$.
From the definition of $K=\sup_{i\in \{1,\dots,d\}}\sup_{x\in\mathbb{S}}|f^{iii}(x)|$, we conclude that $f^{iii}(\xi_i) = 0$.
By Taylor series expansion up to the third order and since $f^{iii}(\xi_i) = 0$, we have
\[
\frac{f(x^*+he_i)-f(x^*-he_i)}{2h}=\nabla f(x^*)_i
\]
Note that $\EX[V_i^{(j)}] = \frac{f(x^*+he_i, f)-f(x^*-he_i, f)}{2h}=\nabla f(x^*)_i$.
Note that $\Var[V_i^{(j)}] = \frac{1}{4h^2}(\Var[\phi(x^*+he_i, f)] + \Var[\phi(x^*-he_i, f)]) = \frac{\sigma^2}{2h^2}$.
Therefore
\[
\EX\left[\frac{1}{T/2d}\sum_{j=1}^{T/2d}V_i^{(j)}\right] = \nabla f(x^*)_i
\]
and
\[
\Var\left[\frac{1}{T/2d}\sum_{j=1}^{T/2d}V_i^{(j)}\right] = \frac{\sigma^2 d}{h^2 T}
\]
For a Gaussian variable $Z \sim \mathcal{N}(0,s^2)$ we know that $\EX[|Z|] = s \sqrt{2/\pi}$, thus
\[
\begin{split}
E_\phi\left[\left|\frac{1}{T/2d}\sum_{j=1}^{T/2d}V_i^{(j)}-\nabla f(x^*)_i\right|\right]
&= \frac{\sigma\sqrt{d}}{h\sqrt{T}} \sqrt{\frac{2}{\pi}}
\end{split}
\]
Thus
\[
\begin{split}
&\epsilon(\mathcal{FDM}, \mathcal{F}, \mathbb{S}, \phi) 
\\&= \EX_\phi [\Vert \nabla \widehat{f}(x^*)_\mathcal{FDM} - \nabla f(x^*) \Vert _ 1 ]
\\&=
\EX_\phi
\left[\sum_{i=1}^{d}
{\left\vert \frac{1}{T/2d}\sum_{j=1}^{T/2d}
V_i^{(j)} - \nabla f(x^*)_i \right\vert} \right]
\\&=
\sum_{i=1}^{d}
\EX_\phi
\left[
{\left\vert
\frac{1}{T/2d}\sum_{j=1}^{T/2d}
V_i^{(j)} - \nabla f(x^*)_i
\right\vert} \right]
\\&=\frac{\sigma d\sqrt{d}}{h\sqrt{T}} \sqrt{\frac{2}{\pi}}
\end{split}
\]
We then use a value $h>0$ that minimizes the $\ell_1$-error of FDM thus $h$ is chosen to be that largest possible value it can take. Such a value, theoretically speaking, is the minimum distance from the point we want to estimate $x^*$ to the region $\mathbb{S}$, which is a constant and for clarity of presentation we will call $h_r$. More formally,
\[
\begin{split}
h_r^+ &= \inf_{i\in \{1,\dots,d\}} \sup_{c\in\mathbb{R}} c\cdot 1[(x^*+ce_i)\in\mathbb{S}]
\\
h_r^- &= \inf_{i\in \{1,\dots,d\}} \sup_{c\in\mathbb{R}} c\cdot 1[(x^*-ce_i)\in\mathbb{S}]
\\
h_r &= \min{\{h_r^+, h_r^-\}}
\end{split}
\]
Finally
\[
\epsilon(\mathcal{FDM}, \mathcal{F}, \mathbb{S}, \phi)= d^{(3/2)}\frac{1}{\sqrt{T}}\frac{\sigma}{h_r} \sqrt{\frac{2}{\pi}}
\]

\section{Concluding Remarks}

There are several ways to extend our results. A new method of approximation could be analyzed with a faster convergence compared to the finite difference method. One can also analyze the estimation of second order derivatives, i.e., the Hessian matrix, which can be useful in optimization methods. Finally, one can also consider the estimation of integrals which can be useful in the approximation of expectations.

\bibliographystyle{abbrv}
\bibliography{references}

\clearpage
\appendix
\onecolumn

\section{Details about the Taylor expansion in the proof of Theorem~\ref{FDM} and Theorem~\ref{FDMgaussian}}

Let $h>0$ and let $e_i = [0,\dots,0,1,0,\dots,0]^\top$ where the entry with value $1$ is at the $i$-th index.
We first express both $f(x^*+he_i)$ and $f(x^*-he_i)$ using the Taylor expansion up to the third order, to get
\[
\begin{split}
f(x^*+he_i)&=f(x^*)+h\nabla f(x^*)_i+\frac{h^2}{2}f^{ii}(x^*)+\frac{h^3}{6}f^{iii}(\xi_i^+)
\\f(x^*-he_i)&=f(x^*)-h\nabla f(x^*)_i+\frac{h^2}{2}f^{ii}(x^*)-\frac{h^3}{6}f^{iii}(\xi_i^-)
\end{split}
\]
where $\nabla f$ is the gradient of $f$, and $f^{ii}$ and $f^{iii}$ represent the second and third derivative of $f$ with respect to the $i$-th variable, respectively.
Here $\xi_i^+$ belongs to the line segment between $x^*$ and $x^*+he_i$, and $\xi_i^-$ belongs to the line segment between $x^*$ and $x^*-he_i$.
Subtracting the two equations above and dividing by $2h$, we get
\[
\frac{f(x^*+he_i)-f(x^*-he_i)}{2h}=\nabla f(x^*)_i+\frac{h^2}{6}f^{iii}(\xi_i)
\]
where $\xi_i \in \mathbb{R}^d$ belongs to the line segment between $x^*-he_i$ and $x^*+he_i$.

\end{document}